\newtheorem{prop}{Proposition}
\begin{document}

\title{Versatile Auxiliary Classifier with Generative Adversarial Network (VAC+GAN), Multi Class Scenarios 
}
\subtitle{Training Conditional Generators}


\author{Shabab Bazrafkan        \and
        Peter Corcoran 
}


\institute{S. Bazrafkan \at
              Cognitive, Connected \& Computational Imaging Research\\ National University of Ireland Galway \\
              Tel.: +353-83-466-7835\\
              \email{s.bazrafkan1@nuigalway.ie}           
                        \and
           P. Corcoran \at
             Cognitive, Connected \& Computational Imaging Research\\ National University of Ireland Galway
}

\date{Received: date / Accepted: date}

\maketitle

\begin{abstract}
Conditional generators learn the data distribution for each class in a multi-class scenario and generate samples for a specific class given the right input from the latent space. In this work, a method known as \textquotedblleft Versatile Auxiliary Classifier with Generative Adversarial Network\textquotedblright for multi-class scenarios is presented. In this technique, the Generative Adversarial Networks (GAN)'s generator is turned into a conditional generator by placing a multi-class classifier in parallel with the discriminator network and backpropagate the classification error through the generator. This technique is versatile enough to be applied to any GAN implementation. The results on two databases and comparisons with other method are provided as well.  
\keywords{Conditional deep generators \and Generative Adversarial Networks \and Machine learning}
\end{abstract}

\section{Introduction}
\label{intro}
With emerge of affordable parallel processing hardware, it became almost impossible to find any aspect of Artificial Intelligence (AI) that Deep Learning (DL) has not been applied to \cite{CEmag1}. DL provides superior outcomes on classification and regression problems compared to classical machine learning methods. The impact of DL is not limited to such problems, but also generative models are taking advantage of these techniques in learning data distribution for big data scenarios where classical methods fail to provide a solution. Generative Adversarial Networks (GAN) \cite{GAN} utilise Deep
Neural Network capabilities and are able to estimate
the data distribution for large size problems. These
models comprise two networks, a generator, and a discriminator. The generator makes random samples from
a latent space, and the discriminator determines whether
the sample is adversarial, made by the generator, or
is genuine image coming from the dataset. GANs are
successful implementations of deep generative models,
and there are multiple variations such as WGAN \cite{WGAN},
EBGAN \cite{EBGAN}, BEGAN \cite{BEGAN}, ACGAN \cite{ACGAN}, and DCGAN \cite{DCGAN},
which have evolved from the original GAN by altering
the loss function and/or the network architecture.
Variational Autoencoders (VAE) \cite{VAE} are the other successful implementation of deep generative models. In
these models the bottleneck of a conventional autoencoder is considered as the latent space of the generator, i.e., the samples are fed to an autoencoder,
and besides the conventional autoencoder’s loss function, the KullbackLeibler (KL) divergence between the
distribution of the data at the bottleneck is minimized
compared to a Gaussian distribution. In practice, this
is achieved by adding the KL divergence term to the
means square error of the autoencoder network. The
biggest downside to VAE models is their blurry outputs due to the mean square error loss \cite{VAE2}.
PixelRNN and PixelCNN \cite{RNN} are other famous implementations of the deep neural generative models. PixelRNN is made of 2-dimensional LSTM units, and in PixelCNN, a Deep Convolutional Neural Network is utilized to estimate the distribution of the data.\\
Training conditional generators are one of the most appealing applications of GAN.  Conditional GAN (CGAN) \cite{CGAN} and Auxiliary Classifier GAN (ACGAN) \cite{ACGAN} are among the most utilized schemes for this purpose.  Wherein the CGAN approach uses the auxiliary class information alongside with partitioning the latent space and ACGAN improves the CGAN idea by introducing a classification loss which back-propagates through the discriminator and generator network. The CGAN method is versatile enough to apply to every variation of GAN.  But ACGAN is restricted to a specific loss function which decreases its adaptivity to other GAN varieties.\\
In \cite{VACGAN}, the ACGAN technique is extended to be applicable to any GAN implementation for binary problems (2 class scenarios). The technique is known as Versatile Auxiliary Classifier with Generative Adversarial Network (VAC+GAN) and is implemented by placing a classifier in parallel with the discriminator and back-propagate the classification error through the generator alongside the GAN's loss.\\
This work expand the original VAC+GAN \cite{VACGAN}  idea to multi-class scenarios. In this approach, the classifier is trained independently from the discriminator which gives the opportunity of applying it to any variation of GAN. The main contribution of VAC+GAN is its versatility, and proofs are provided to show the applicability of the method regardless of the GAN structure or loss functions.\\
In the next section the VAC+GAN for multi-class scenarios is explained. And in the third section the implementations of the ACGAN and VAC+GAN is presented alongside with the comparisons with other methods. The discussions and future works are given in the last section.
\section{Versatile Auxiliary Classifier + Generative Adversarial Network (VAC+GAN)}
\label{sec:2}
The concept proposed in this research is to place a classifier network in parallel with the Discriminator. The classifier accepts the samples from the generator, and the classification error is back-propagated through the classifier and the generator. The model structure is shown in figure \ref{fig:1}.
\begin{figure}
  \includegraphics[width=\columnwidth]{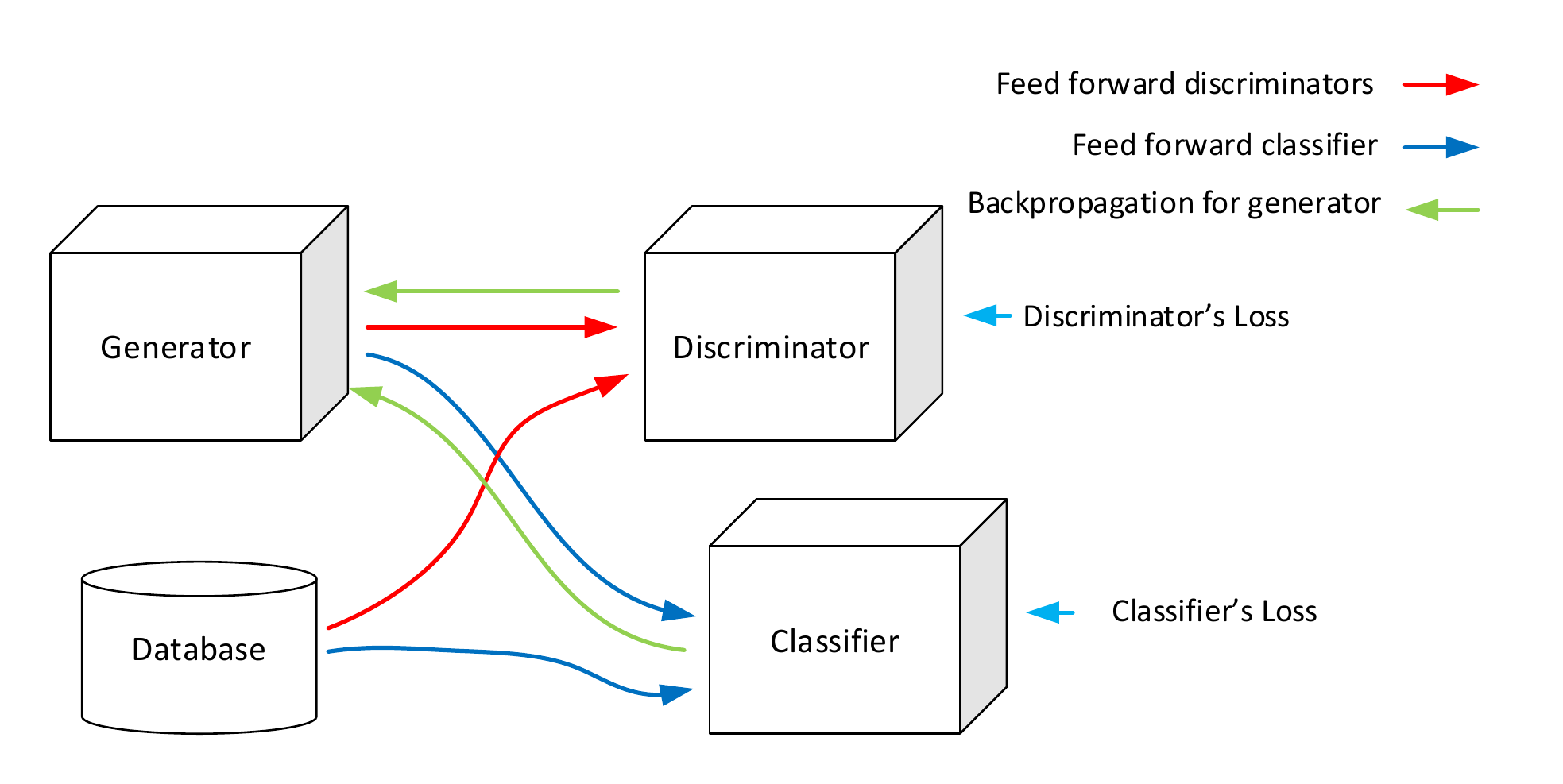}
\caption{The presented model for training conditional deep generators.}
\label{fig:1}       
\end{figure}

In this section it is shown that by placing a classifier at the output of the generator and minimizing the categorical cross-entropy as the classifiers loss, the Jensen-Shannon Divergence between all the classes is increased. The terms used in the mathematical proofs are as follows:
\begin{enumerate}
\item $N$ is the number of the classes.
\item The latent space $Z$ is partitioned in to $\{{Z_1},{Z_2},\ldots,{Z_N}\}$ subsets. This means that $\{{Z_1},{Z_2},\ldots,{Z_N}\}$ are disjoint and their union is equal to the $Z$-space.
\item $C$ is the classifier function.
\item $\mathcal{L}_{ce}$ is the binary cross-entropy loss function.
\item $\mathcal{L}_{cce}$ is the categorical cross-entropy loss function.
\end{enumerate}
\begin{prop}
In the multiple classes case, the classifier $C$ has $N$ outputs, where $N$ is the number of the classes. In this approach, each output of the classifier corresponds to one class. For a fixed Generator and Discriminator, the optimal output for class $c$ ($c$'th output) is:
\begin{equation}
C_{G,D}^{*}(c)=\frac{p_{X_c}({\bf x})}{\sum_{i=1}^{N}p_{X_i}(\bf x)}
\label{eq:prop2}
\end{equation}
\end{prop}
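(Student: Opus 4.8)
The plan is to mirror the classic Goodfellow-style optimal-discriminator argument, adapted here to a softmax classifier with $N$ outputs under the categorical cross-entropy loss. First I would write $\mathcal{L}_{cce}$ explicitly as a sum over classes of expectations taken under the respective class-conditional densities $p_{X_c}$, where $p_{X_c}$ denotes the distribution of generator outputs produced from latent inputs drawn from the partition cell $Z_c$ (so the dependence on $G$ enters through these densities). Writing $C(c)$ for the $c$-th output viewed as a function of $\mathbf{x}$ and assuming equal class priors, the loss takes the form
\begin{equation}
\mathcal{L}_{cce} = -\sum_{c=1}^{N} \int p_{X_c}(\mathbf{x}) \log C(c) \, d\mathbf{x},
\end{equation}
whose integrand at each fixed $\mathbf{x}$ is $-\sum_{c=1}^{N} p_{X_c}(\mathbf{x}) \log C(c)$.

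The second step is to argue that, with $G$ and $D$ held fixed and the classifier assumed expressive enough to realise any normalized output vector at each point, minimizing the integral reduces to minimizing this integrand pointwise in $\mathbf{x}$. This is the same reduction that underlies the optimal GAN discriminator derivation, and it is what lets me drop the integral and optimize the inner expression directly.

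The third step handles the feature that distinguishes this from the unconstrained binary case: the softmax outputs must satisfy $\sum_{c=1}^{N} C(c) = 1$ at every $\mathbf{x}$. I would introduce a Lagrange multiplier $\lambda$ and minimize
\begin{equation}
-\sum_{c=1}^{N} p_{X_c}(\mathbf{x}) \log C(c) + \lambda\left(\sum_{c=1}^{N} C(c) - 1\right).
\end{equation}
Setting the derivative with respect to each $C(c)$ to zero gives $C(c) = p_{X_c}(\mathbf{x})/\lambda$; substituting back into the constraint yields $\lambda = \sum_{i=1}^{N} p_{X_i}(\mathbf{x})$, which recovers the claimed optimum in equation~\eqref{eq:prop2}. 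Since $-\log$ is convex and the constraint is linear, this stationary point is automatically the global minimizer, so no second-order verification beyond noting convexity is required.

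The main obstacle I anticipate is not the algebra but stating the two modelling assumptions cleanly: (i) that the pointwise reduction is legitimate, which hinges on the classifier being able to attain any normalized output at each $\mathbf{x}$ independently, and (ii) that the class priors are uniform, since the stated formula carries no prior weights. If the priors were non-uniform they would appear as coefficients in both the numerator and each summand of the denominator, so I would make the equal-prior assumption explicit at the outset to match the clean form in \eqref{eq:prop2}. With those assumptions in place, the Lagrange computation delivers the result immediately.
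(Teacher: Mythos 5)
Your proof is correct, but it follows a genuinely different route from the paper's. The paper does not work with the full categorical cross-entropy at all in this proposition: it isolates the single output $C(c)$, reduces the problem to a \emph{binary} cross-entropy in which class $c$ is played against the mixture $\sum_{i\neq c}p_{X_i}$, rewrites that as $\int\bigl(m\log f+n\log(1-f)\bigr)d{\bf x}$ with $m=p_{X_c}({\bf x})$ and $n=\sum_{i\neq c}p_{X_i}({\bf x})$, and invokes the standard Goodfellow fact that $f\mapsto m\log f+n\log(1-f)$ is extremised at $m/(m+n)$. That argument optimizes each output independently and never imposes the softmax normalization; it just so happens that the resulting per-output optima sum to one. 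You instead keep the genuine $N$-output categorical cross-entropy $-\sum_c p_{X_c}({\bf x})\log C(c)$, reduce to a pointwise problem, and enforce $\sum_c C(c)=1$ explicitly via a Lagrange multiplier, with convexity guaranteeing globality. Your version is arguably more faithful to the loss the classifier actually minimizes (a softmax output has no independent $\log(1-C)$ term per class) and makes the normalization constraint and the equal-prior assumption explicit, both of which the paper leaves implicit; the paper's version buys a shorter argument that reuses the familiar two-class lemma verbatim. Both are legitimate and land on the same formula~\eqref{eq:prop2}.
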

\begin{proof}
Considering just one of the outputs of the classifier, the categorical cross-entropy can be reduced to binary cross-entropy given by
\begin{equation}
\begin{split}
\mathcal{L}_{ce}(C(c)) = &-\mathbb{E}_{{\bf z}\sim p_{Z_c}({\bf z})}\big[\log\big(C(G({\bf z}))\big)\big]\\
&-\mathbb{E}_{{\bf z}\sim\sum_{i\neq c}p_{Z_i}(\bf z)}\big[1-\log\big(C(G({\bf z}))\big)\big]\\
\end{split}
\label{eq:p4e1}
\end{equation}
which is equal to
\begin{equation}
\begin{split}
\mathcal{L}_{ce}(C(c)) &=\int\Big(p_{Z_c}({\bf z})\log\big(C(G({\bf z}))\big)\\
&+ \big(\sum_{i\neq c}p_{Z_i}({\bf z})\big)\log\big(1-C(G({\bf z}))\big)d{\bf z}\Big)
\end{split}
\label{eq:p4e2}
\end{equation}
By considering $G({\bf z}_i)={\bf x}_i$ we have
\begin{equation}
\begin{split}
\mathcal{L}_{ce}(C(c)) &=\int \Big(p_{X_c}({\bf x})\log(C({\bf x}))\\&+\big(\sum_{i\neq c}p_{X_i}({\bf x})\big)\log(1-C({\bf x})) d{\bf x}\Big)
\end{split}
\label{eq:p4e3}
\end{equation}
The function $f\rightarrow m\log(f)+n\log(1-f)$ gets its maximum at $\frac{m}{m+n}$ for any $(m,n)\in \mathbb{R}^2 \setminus \{0,0\}$, concluding the proof.
\end{proof}
\begin{theorem}
The maximum value for $\mathcal{L}_{cce}(C)$ is $N\log(N)$ and is achieved if and only if $p_{X_1}=p_{X_2}=\ldots=p_{X_N}$.
\end{theorem}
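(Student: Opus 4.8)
The plan is to substitute the optimal classifier from the Proposition into the categorical cross-entropy and show that the resulting functional of the generator splits into the constant $N\log(N)$ minus a sum of non-negative divergences, so that the whole expression is bounded above by $N\log(N)$. First I would write the categorical cross-entropy (evaluated, as in the Proposition, at the optimal classifier $C^{*}_{G,D}$) as the sum over classes of the expected negative log-probability assigned to the correct class,
\[
\mathcal{L}_{cce}(C) = -\sum_{c=1}^{N}\mathbb{E}_{\mathbf{z}\sim p_{Z_c}(\mathbf{z})}\big[\log C(G(\mathbf{z}))_c\big],
\]
and then convert each term to an integral over $\mathbf{x}$ via the same change of variables $G(\mathbf{z})=\mathbf{x}$ used in the Proposition, giving $\mathcal{L}_{cce}(C)=-\sum_{c}\int p_{X_c}(\mathbf{x})\log C(\mathbf{x})_c\,d\mathbf{x}$.

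Next I would insert the optimal output $C^{*}_{G,D}(c)=p_{X_c}(\mathbf{x})/\sum_{i=1}^{N}p_{X_i}(\mathbf{x})$ established in \eqref{eq:prop2}. Introducing the equally-weighted mixture $\bar p(\mathbf{x})=\frac{1}{N}\sum_{i=1}^{N}p_{X_i}(\mathbf{x})$, I would rewrite the argument of the logarithm as $p_{X_c}/\sum_i p_{X_i}=\frac{1}{N}\,p_{X_c}/\bar p$ and split the logarithm into $\log(p_{X_c}/\bar p)-\log(N)$. Because each $p_{X_c}$ integrates to one, the $-\log(N)$ piece contributes $+\log(N)$ per class, and summing over the $N$ classes produces the constant $N\log(N)$; the remaining piece is exactly $\sum_{c=1}^{N}\mathrm{KL}\!\left(p_{X_c}\,\|\,\bar p\right)$. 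This yields the decomposition
\[
\mathcal{L}_{cce}(C^{*})=N\log(N)-\sum_{c=1}^{N}\mathrm{KL}\!\left(p_{X_c}\,\|\,\bar p\right),
\]
in which the subtracted term is $N$ times the generalized, equally-weighted Jensen-Shannon divergence of the class distributions.

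Finally I would invoke non-negativity of the Kullback-Leibler divergence (Gibbs' inequality, equivalently Jensen's inequality applied to the strictly convex $-\log$): each summand satisfies $\mathrm{KL}(p_{X_c}\,\|\,\bar p)\ge 0$, whence $\mathcal{L}_{cce}(C^{*})\le N\log(N)$. Equality holds precisely when every divergence vanishes, and since $\mathrm{KL}(p\,\|\,q)=0$ if and only if $p=q$ almost everywhere, this forces $p_{X_c}=\bar p$ for all $c$, i.e. $p_{X_1}=p_{X_2}=\cdots=p_{X_N}$; conversely, equal class distributions make every KL term zero and attain the bound. This establishes both the maximal value $N\log(N)$ and the stated equality condition.

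The step I expect to be the main obstacle is the bookkeeping in the decomposition: one must take the categorical cross-entropy in its single-correct-class form rather than as a sum of the binary losses of the Proposition (which would introduce spurious $\log(1-C)$ terms and spoil the clean constant), and one must verify that the $-\log(N)$ contributions aggregate correctly to $N\log(N)$ across the classes. Recognizing the residual sum as $N$ times the Jensen-Shannon divergence — and thereby tying the maximum of $\mathcal{L}_{cce}$ to the collapse of all class distributions — is the conceptual crux that connects this theorem back to the divergence-increasing narrative motivating the section.
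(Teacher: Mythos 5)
Your proposal is correct and follows essentially the same route as the paper: substitute the optimal classifier from the Proposition into $\mathcal{L}_{cce}$, pull out the constant $N\log(N)$ by rewriting the denominator via the equal-weight mixture, and bound the residual $\sum_{i}\mathrm{KL}\big(p_{X_i}\,\|\,\tfrac{1}{N}\sum_j p_{X_j}\big)$ below by zero. If anything, you are slightly more complete than the paper on the \emph{only if} direction, since you explicitly invoke $\mathrm{KL}(p\,\|\,q)=0\iff p=q$ to force all class distributions to coincide, whereas the paper only verifies the \emph{if} direction.
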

\begin{proof}
The categorical cross-entropy is given by
\begin{equation}
\begin{split}
\mathcal{L}_{cce} &= -\sum_{i=1}^{N}\mathbb{E}_{{\bf z}\sim p_{Z_i}({\bf z})}\big[\log\big(C(G({\bf z}))\big)\big]\\
&=-\sum_{i=1}^{N}\int p_{X_i}({\bf x})\log(C({\bf x}))d{\bf x}
\end{split}
\label{eq:p5e1}
\end{equation}
From equation \ref{eq:prop2} we have
\begin{equation}
\begin{split}
&\mathcal{L}_{cce} = -\sum_{i=1}^{N}\Bigg(\int p_{X_i}({\bf x})\log\bigg(\frac{p_{X_i}({\bf x})}{\sum_{j=1}^{N}p_{j}({\bf x})}\bigg)d{\bf x}\Bigg)\\
&=-\sum_{i=1}^{N}\Bigg(\int p_{X_i}({\bf x})\log\bigg(\frac{p_{X_i}({\bf x})}{\sum_{j=1}^{N}\frac{p_{j}({\bf x})}{N}}\bigg)d{\bf x}\Bigg)+N\log(N)\\
&=Nlog(N)-\sum_{i=1}^{N}KL\bigg(p_{X_i}({\bf x})\bigg|\bigg|\sum_{j=1}^{N}\frac{p_{X_j}({\bf x})}{N}\bigg)
\end{split}
\label{eq:p5e2}
\end{equation}
Where $KL$ is the Kullback-Leibler divergence, which is always positive or equal to zero.\\
Now consider $p_{X_1}=p_{X_2}=\dots=p_{X_N}$. From \ref{eq:p5e2} we have
\begin{equation}
\begin{split}
\mathcal{L}_{cce} &= N\log(N)-\sum_{i=1}^{N}KL\Big(p_{X_i}({\bf x})\Big|\Big|p_{X_i}({\bf x})\Big)\\&=N\log(N)
\end{split}
\end{equation}
concluding the proof.
\end{proof}
\begin{theorem}
Minimizing $\mathcal{L}_{cce}$ increases the \\Jensen-Shannon Divergence between $p_{X_1},p_{X_2},\ldots,p_{X_N}$
\end{theorem}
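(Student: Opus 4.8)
The plan is to read the result off directly from the identity already established in the previous theorem, by recognising the sum of Kullback--Leibler terms appearing there as a scaled generalized Jensen--Shannon Divergence. First I would recall the definition of the Jensen--Shannon Divergence for $N$ distributions with uniform weights $1/N$, namely
\begin{equation}
JSD(p_{X_1},\ldots,p_{X_N}) = \frac{1}{N}\sum_{i=1}^{N} KL\bigg(p_{X_i}({\bf x})\bigg|\bigg|\frac{1}{N}\sum_{j=1}^{N}p_{X_j}({\bf x})\bigg)
\label{eq:jsd}
\end{equation}
in which the mixture $M({\bf x})=\frac{1}{N}\sum_{j=1}^{N}p_{X_j}({\bf x})$ serves as the common reference distribution against which each class density is compared.

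Next I would line up \eqref{eq:jsd} with the final expression for $\mathcal{L}_{cce}$ obtained in equation \eqref{eq:p5e2}. The summation $\sum_{i=1}^{N}KL\big(p_{X_i}\,\|\,M\big)$ appearing there is exactly $N$ times the quantity in \eqref{eq:jsd}. Substituting this identification gives the affine relation
\begin{equation}
\mathcal{L}_{cce} = N\log(N) - N\cdot JSD(p_{X_1},\ldots,p_{X_N}),
\label{eq:affine}
\end{equation}
or equivalently $JSD(p_{X_1},\ldots,p_{X_N}) = \log(N) - \mathcal{L}_{cce}/N$.

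From \eqref{eq:affine} the statement is immediate: the Jensen--Shannon Divergence is an affine, strictly decreasing function of $\mathcal{L}_{cce}$ with slope $-1/N$, so every decrease of the categorical cross-entropy loss produces a proportional increase of the divergence between the class distributions. The divergence reaches its maximal value $\log(N)$ exactly when $\mathcal{L}_{cce}$ is driven to its minimum, i.e.\ when the class distributions are pushed maximally apart, whereas the degenerate case $p_{X_1}=\cdots=p_{X_N}$ identified in the previous theorem sits at the opposite extreme, where the divergence vanishes and the loss attains its maximum $N\log(N)$.

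The main obstacle, and the point I would be most careful about, is that the closed form in \eqref{eq:p5e2} was derived by substituting the optimal classifier output $C^{*}_{G,D}$ supplied by the Proposition. The clean affine link between loss and divergence therefore holds under the standing assumption that, for each fixed generator and discriminator, the classifier has been trained to (near) optimality before the generator is updated; without this the sum of $KL$ terms no longer equals $N\cdot JSD$ and the monotone correspondence can fail. A secondary point worth flagging explicitly is that the conclusion is tied to the uniform-weight convention adopted in \eqref{eq:jsd}, which fixes the scaling factor $N$ relating the $KL$ sum to $JSD$ and hence the precise slope in \eqref{eq:affine}.
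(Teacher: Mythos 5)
Your proposal is correct and lands on exactly the identity the paper derives, $\mathcal{L}_{cce}=N\log(N)-N\cdot JSD_{\frac{1}{N},\ldots,\frac{1}{N}}(p_{X_1},\ldots,p_{X_N})$, but it gets there by a shorter route. You take the generalized Jensen--Shannon divergence in its Kullback--Leibler form, $JSD=\frac{1}{N}\sum_{i}KL\big(p_{X_i}\,\|\,\frac{1}{N}\sum_{j}p_{X_j}\big)$, so the sum of $KL$ terms already present in equation \eqref{eq:p5e2} is identified with $N\cdot JSD$ in one line. The paper instead adopts the entropy form of the definition, $JSD_{\pi}(p_1,\ldots,p_N)=H(\sum_i\pi_i p_i)-\sum_i\pi_i H(p_i)$, and therefore spends equations \eqref{eq:p6e1}--\eqref{eq:p6e4} expanding each $KL$ term into $\int p_{X_i}\log p_{X_i}$ and $\int p_{X_i}\log(\sum_j p_{X_j}/N)$ pieces and regrouping them into entropies of the components and of the mixture; that algebra is precisely the standard proof that the two forms of $JSD$ coincide, so the two arguments differ only in whether that equivalence is invoked or rederived. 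Your version buys brevity at the cost of assuming the reader accepts the $KL$-based characterization; the paper's version is self-contained with respect to its stated entropy-based definition (modulo a dropped factor of $N$ in \eqref{eq:p6e4} that reappears in the final line). Your closing caveat --- that the affine relation presupposes the classifier is at its optimum $C^{*}_{G,D}$ from the Proposition before each generator update --- is a correct and worthwhile observation that the paper leaves implicit.
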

\begin{proof}
From equation \ref{eq:p5e2} we have
\begin{equation}
\begin{split}
&\mathcal{L}_{cce}=N\log(N)\\
&\begin{split}-\int\sum_{i=1}^{N}\Bigg(p_{X_i}({\bf x})&\bigg[\log(p_{X_i}({\bf x}))\\&-\log\Big(\sum_{j=1}^{N}\frac{p_{X_j}(\bf x)}{N}\Big)\bigg]\Bigg)d{\bf x}\end{split}
\end{split}
\label{eq:p6e1}
\end{equation}
Which can be rewritten as
\begin{equation}
\begin{split}
\mathcal{L}_{cce}&=N\log(N)-\sum_{i=1}^{N}\bigg(\int p_{X_i}({\bf x})\log(p_{X_i}({\bf x}))d{\bf x}\bigg)\\
&+\int\underbrace{\sum_{i=1}^{N}\Bigg(p_{X_i}({\bf x})\log\bigg(\sum_{j=1}^{N}\frac{p_{X_j}({\bf x})}{N}\bigg)\Bigg)}_{\big(\sum_{i=1}^{N}p_{X_i}({\bf x})\big)\bigg(\log\Big(\sum_{j=1}^{N}\frac{p_{X_{j}({\bf x})}}{N}\Big)\bigg)}d{\bf x}
\end{split}
\label{eq:p6e2}
\end{equation}
Which is equal to
\begin{equation}
\begin{split}
&\mathcal{L}_{cce}=N\log(N)\\&-N\sum_{i=1}^{N}\bigg(\frac{1}{N}\int p_{X_i}({\bf x})\log(p_{X_i}({\bf x}))d{\bf x}\bigg)\\
&+N\int\bigg(\sum_{i=1}^{N}\frac{p_{X_i}({\bf x})}{N}\bigg)\bigg(\log\Big(\sum_{j=1}^{N}\frac{p_{X_{j}({\bf x})}}{N}\Big)\bigg)d{\bf x}
\end{split}
\label{eq:p6e3}
\end{equation}
This equation can be rewritten as
\begin{equation}
\begin{split}
\mathcal{L}_{cce} &= N\log(N)\\&-\Bigg[H\bigg(\sum_{i=1}^{N}\frac{1}{N}p_{X_i}({\bf x})\bigg)-\sum_{i=1}^{N}\frac{1}{N}H\big(p_{X_i}({\bf x})\big)\Bigg]
\end{split}
\label{eq:p6e4}
\end{equation}
wherein the $H(p)$ is the Shannon entropy of the distribution $p$.\\
The Jensen Shannon divergence between $N$ distributions $p_1,p_2,\ldots,p_N$, is defined as
\begin{equation}
\begin{split}
JSD_{\pi_1,\pi_2,\ldots,\pi_N}\Big(p_1,p_2,\ldots,p_N\Big)&=H\bigg(\sum_{i=1}^{N}\pi_i p_i\bigg)\\&-\sum_{i=1}^{N}\pi_iH(p_i)
\end{split}
\label{eq:p6e5}
\end{equation}
From equations \ref{eq:p6e4} and \ref{eq:p6e5} we have
\begin{equation}
\begin{split}
\mathcal{L}_{cce} &= N\log(N)\\&-N~JSD_{\frac{1}{N},\frac{1}{N},\ldots,\frac{1}{N}}\Big(p_{X_1}({\bf x}),p_{X_2}({\bf x}),\ldots,p_{X_N}({\bf x})\Big)
\end{split}
\end{equation}
Minimizing $\mathcal{L}_{cce}$ is increasing the JSD term, concluding the proof.
\end{proof}
In this section it has been shown that by placing a classifier at the output of the generator and back-propagate the classification error throughout the generator one can increase the dis-similarity between the classes for generator and therefore train a deep generator that can produce class specified samples. In the next section the proposed idea is implemented for multi-class cases and also compared with state of the art methods.
\section{Experimental Results}
\label{sec:3}
In this section, two main experiments are explained to show the effectiveness of VAC+GAN. The first one is on MNIST database and visual comparisons with CGAN, CDCGAN and ACGAN is presented. The second experiment is on CFAR10 dataset and the classification error is compared against ACGAN method.
All the networks are trained in Lasagne \cite{LASAGNE} on top of Theano \cite{THEANO} library in Python, unless stated otherwise.

\subsection{MNIST}
\label{sec3subsec:1}
In this experiment, the performance of the proposed method is investigated on MNIST database. MNIST ("Modified National Institute of Standards and Technology") is known as the "hello world" dataset of computer vision. It is a historically significant image classification benchmark introduced in 1999, and there has been a considerable amount of research published on MNIST image classification. MNIST contains 60,000 training images and 10,000 test images, both drawn from the same distribution. It consists of $28\times 28$ pixel images of handwritten digits. Each image is assigned a single truth label digit from $[0,9]$. \\
The proposed method has been applied to the DCGAN scheme. The Generator, Discriminator and the Classifier used in this experiment are given in tables \ref{tab:2}, \ref{tab:3} and \ref{tab:4} respectively.
\begin{table}
\caption{the generator structure for the MNIST+DCGAN experiment. All deconvolution layers are using (2,2) padding with stride (2,2).}
\label{tab:2}       
\begin{tabular}{|l|l|l|l|}
\hline
Layer & Type & kernel & Activation  \\
\hline
Input & Input$(10)$ & -- & -- \\
\hline
Hidden 1 & Dense & $1024$& ReLU \\
\hline
BatchNorm 1 &-- &-- &--\\
\hline
Hidden 2 & Dense & $128\times 7 \times 7$& ReLU \\
\hline
BathNorm 2 &-- &-- &--\\
\hline
Hidden 3 & Deconv & $5\times 5$ (64ch)& ReLU \\
\hline
BathNorm 3 &-- &-- &--\\
\hline
Output & Deconv & $5\times 5$ (1ch)& Sigmoid \\
\hline
\end{tabular}
\end{table}
\begin{table}
\caption{the discriminator structure for the MNIST+DCGAN experiment. All convolution layers are using (2,2) padding with stride (2,2).}
\label{tab:3}       
\begin{tabular}{|l|l|l|l|}
\hline
Layer & Type & kernel & Activation  \\
\hline
Input & Input & -- & -- \\
\hline
Hidden 1 & Conv & $5 \times 5$ (64 ch)& LeakyR(0.2) \\
\hline
BatchNorm 1 &-- &-- &--\\
\hline
Hidden 2 & Conv & $5 \times 5$ (128 ch)& LeakyR(0.2) \\
\hline
BathNorm 2 &-- &-- &--\\
\hline
Hidden 3 & Dense & 1024 & LeakyR(0.2) \\
\hline
Output & Dense & 1 & Sigmoid \\
\hline
\end{tabular}
\end{table}
\begin{table}
\caption{the classifier structure for the MNIST+DCGAN experiment.}
\label{tab:4}       
\begin{tabular}{|l|l|l|l|}
\hline
Layer & Type & Kernel & Activation  \\
\hline
Input & Input$(28\times 28)$ & -- & -- \\
\hline
Hidden 1 & Conv & $3\times3$(16 ch)& ReLU \\
\hline
Pool 1 &Max pooling &$2\times2$ &--\\
\hline
Hidden 2 & Conv & $3\times3$(8 ch)& ReLU \\
\hline
Pool 2 &Max pooling &$2\times2$ &--\\
\hline
Hidden 3 & Dense & 1024& ReLU \\
\hline
Output & Dense & 10 & Softmax \\
\hline
\end{tabular}
\end{table}
And the loss function for the proposed method (VAC+GAN) is given by:
\begin{equation}
\begin{split}
&L_g = \vartheta \cdot BCE(G(z|c),1)+ \zeta \cdot CCE\\
&L_d = BCE(x,1)+BCE(G(z|c),0)
\end{split}
\end{equation}
where, $L_g$, and $L_d$ are the generator and discriminator losses respectively, $G$ is the generator function, $BCE$ is the binary cross-entropy loss for discriminator and $CCE$ is the categorical cross-entropy loss for the classifier. In this experiment, $\vartheta$ and $\zeta$ are equal to 0.2 and 0.8 respectively.\\
The optimizer used for training the generator and discriminator is ADAM with learning rate, $\beta_1$ and $\beta_2$ equal to 0.0002, 0.5 and 0.999 respectively. And the classifier is optimized using nestrov momentum gradient descent with learning rate and momentum equal to 0.01 and 0.9 respectively.
The results of the conditional generators trained using Conditional GAN (CGAN)\footnote{https://github.com/znxlwm/tensorflow-MNIST-cGAN-cDCGAN}, Conditional DCGAN (CDCGAN)\footnote{https://github.com/znxlwm/tensorflow-MNIST-cGAN-cDCGAN}, ACGAN\footnote{https://github.com/buriburisuri/ac-gan}, and proposed method (VAC+GAN) on MNIST dataset are shown in figures \ref{fig:10}, \ref{fig:11}, \ref{fig:12}\footnote{https://github.com/buriburisuri/ac-gan/blob/master/png/sample.png},and \ref{fig:13} respectively.\\
\begin{figure}
  \includegraphics[width=\columnwidth]{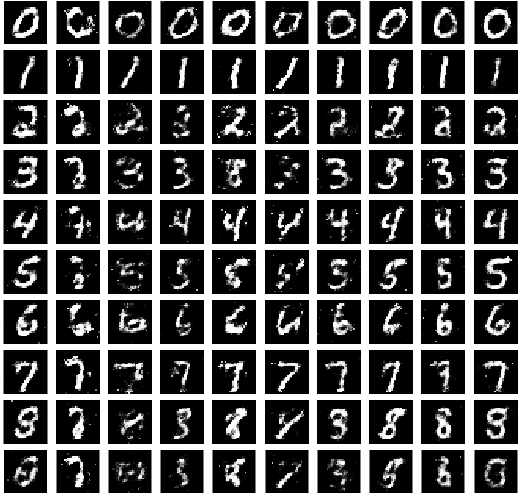}
\caption{Samples drawn from conditional generator trained using CGAN scheme on MNIST dataset. each row corresponds to one class.}
\label{fig:10}       
\end{figure}
\begin{figure}
  \includegraphics[width=\columnwidth]{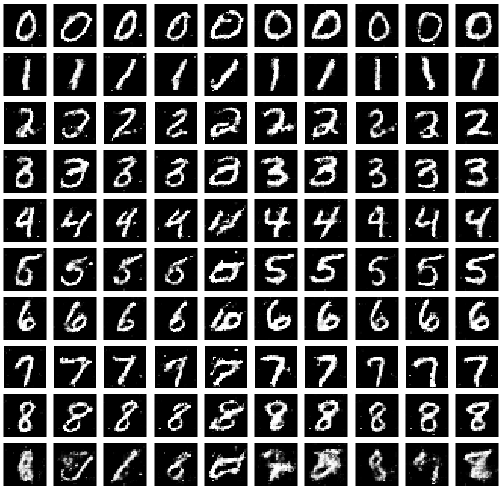}
\caption{Samples drawn from conditional generator trained using CDCGAN scheme on MNIST dataset. each row corresponds to one class.}
\label{fig:11}       
\end{figure}
\begin{figure}
  \includegraphics[width=\columnwidth]{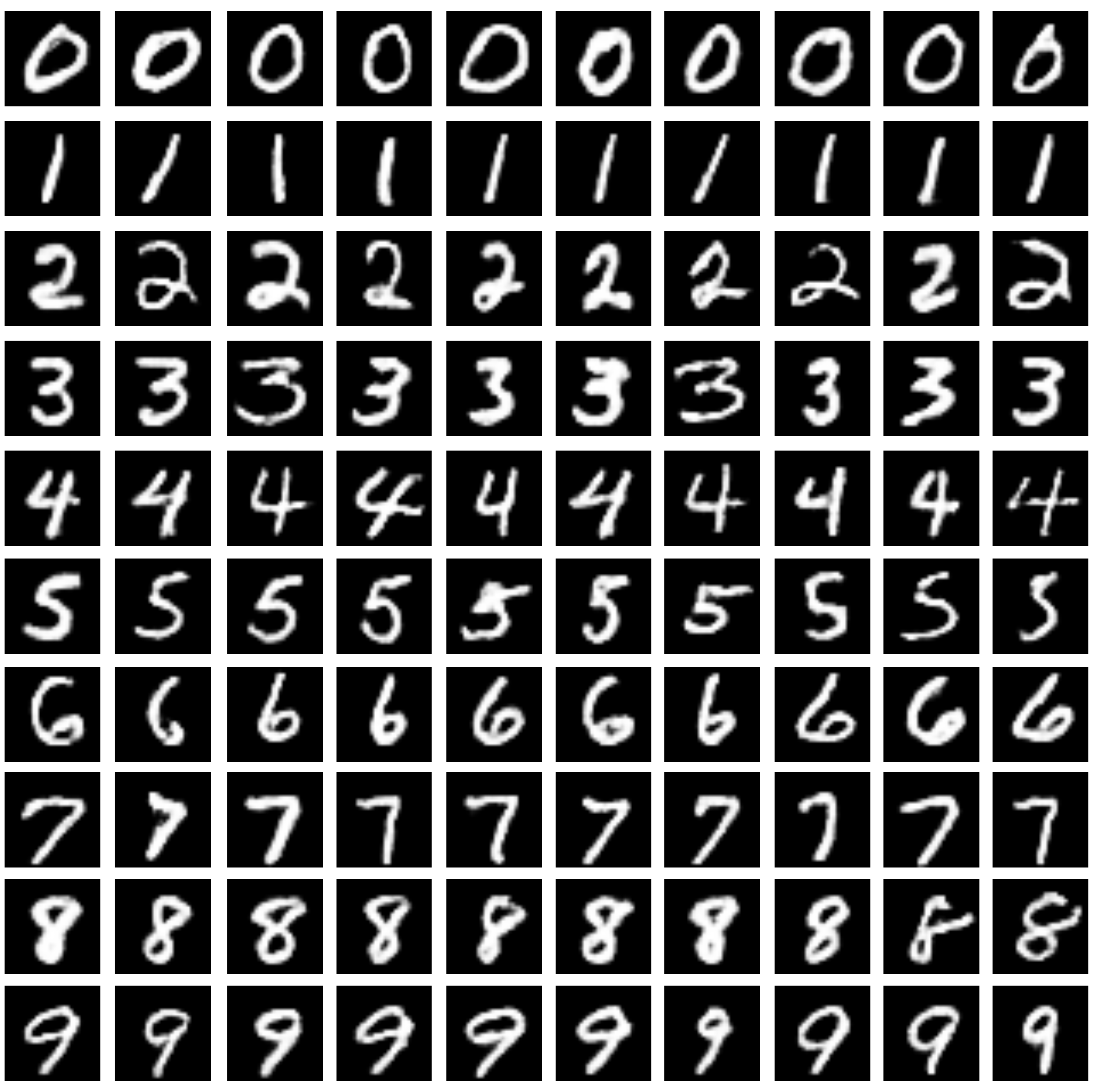}
\caption{Samples drawn from conditional generator trained using ACGAN scheme on MNIST dataset. each row corresponds to one class.}
\label{fig:12}       
\end{figure}
\begin{figure}
  \includegraphics[width=\columnwidth]{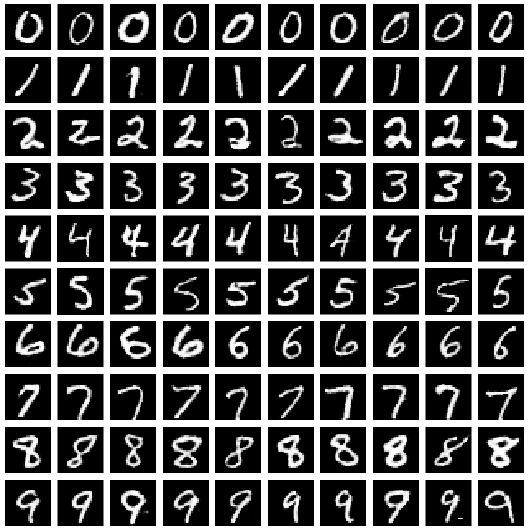}
\caption{Samples drawn from conditional generator trained using proposed scheme (VAC+GAN) on MNIST dataset. each row corresponds to one class.}
\label{fig:13}       
\end{figure}
As it is shown in these figures the presented method gives superior results compare to CGAN and CDCGAN while using the exact same structure of generator as in CDCGAN. The results are comparable with ACGAN and the difference here is that this method is more versatile and can be applied to any GAN model regardless of model architecture and loss function.
\subsection{CFAR10}
The CFAR10 database \cite{CFAR10} consists of 60000 images in 10 classes wherein 50000 of these images are for training and 10000 for testing purposes. The next experiment is comparing ACGAN\footnote{https://github.com/King-Of-Knights/Keras-ACGAN-CIFAR10} to VAC+GAN method on generating images and also the classification accuracy of these methods are compared. Networks utilized in this experiment are shown in tables \ref{tab:cfar1},\ref{tab:cfar2} and \ref{tab:cfar3} correspond to generator, discriminator\footnote{https://github.com/King-Of-Knights/Keras-ACGAN-CIFAR10/blob/master/cifar10.py} and classifier respectively. The same generator and discriminator architectures have been used in both implementations to obtain fair comparisons.
\begin{table}
\caption{the generator structure for the CFAR10 experiment. All deconvolution layers are using 'SAME' padding with stride (2,2).}
\label{tab:cfar1}       
\begin{tabular}{|l|l|l|l|}
\hline
Layer & Type & kernel & Activation  \\
\hline
Input & Input & -- & -- \\
\hline
Hidden 1 & Dense & $384\time4\times4$& ReLU \\
\hline
Reshape & Reshape & 384ch $4\times4$ &--\\
\hline
Hidden 2 & DeConv & $5 \times 5$ (192 ch)& ReLU \\
\hline
BathNorm 2 &-- &-- &--\\
\hline
Hidden 3 & DeConv & $5 \times 5$ (96 ch) & ReLU \\
\hline
BathNorm 3 &-- &-- &--\\
\hline
Output & DeConv & $5 \times 5$ (3 ch)& tanh \\
\hline
\end{tabular}
\end{table}

\begin{table}
\caption{the discriminator structure for the CFAR10 experiment. All deconvolution layers are using 'SAME' padding with kernel size $3\times3$, $st$ stands for stride size and MBDisc is Mini Batch Discrimination layer explained in \cite{MBDISC}.}
\label{tab:cfar2}       
\begin{tabular}{|l|l|l|l|}
\hline
Layer & Type & kernel & Activation  \\
\hline
Input & Input & $32\times32\times3$ & -- \\
\hline
Gaussian & Noise & $\sigma = 0.05$& -- \\
\hline
Hidden 1 & Conv & 16ch $st(2,2)$ & LeakyR(0.2)\\
\hline
DropOut 1 & DropOut & $p=0.5$& -- \\
\hline
Hidden 2 & Conv & 32ch $st(1,1)$ & LeakyR(0.2)\\
\hline
BathNorm 1 &-- &-- &--\\
\hline
DropOut 2 & DropOut & $p=0.5$& -- \\
\hline
Hidden 3 & Conv & 64ch $st(2,2)$ & LeakyR(0.2)\\
\hline
BathNorm 2 &-- &-- &--\\
\hline
DropOut 3 & DropOut & $p=0.5$& -- \\
\hline
Hidden 4 & Conv & 128ch $st(1,1)$ & LeakyR(0.2)\\
\hline
BathNorm 3 &-- &-- &--\\
\hline
DropOut 4 & DropOut & $p=0.5$& -- \\
\hline
Hidden 5 & Conv & 256ch $st(2,2)$ & LeakyR(0.2)\\
\hline
BathNorm 4 &-- &-- &--\\
\hline
DropOut 5 & DropOut & $p=0.5$& -- \\
\hline
Hidden 6 & Conv & 512ch $st(1,1)$ & LeakyR(0.2)\\
\hline
BathNorm 5 &-- &-- &--\\
\hline
DropOut 6 & DropOut & $p=0.5$& -- \\
\hline
MBDisc \cite{MBDISC}& -- & --& -- \\
\hline
Output & Dense & 1& sigmoid \\
\hline
\end{tabular}
\end{table}

\begin{table}
\caption{the classifier structure for the CFAR10 experiment.}
\label{tab:cfar3}       
\begin{tabular}{|l|l|l|l|}
\hline
Layer & Type & kernel & Activation  \\
\hline
Input & Input & $32\times32\times3$ & -- \\
\hline
Hidden 1 & Conv & $5\times5 $ (128ch)& ReLU \\
\hline
BatchNorm 1 & -- & -- &--\\
\hline
MaxPool 1 & MaxPool & (2,2)& -- \\
\hline
Hidden 2 & Conv & $5\times5 $ (256ch)& ReLU \\
\hline
BatchNorm 2 & -- & -- &--\\
\hline
MaxPool 2 & MaxPool & (2,2)& -- \\
\hline
Hidden 3 & Conv & $5\times5 $ (512ch)& ReLU \\
\hline
BatchNorm 3 & -- & -- &--\\
\hline
MaxPool 3 & MaxPool & (2,2)& -- \\
\hline
Hidden 4 & Dense & 512 & ReLU\\
\hline
Output & Dense & 10& softmax \\
\hline
\end{tabular}
\end{table}
The loss function used to train the VAC+GAN is given by
\begin{equation}
\begin{split}
&L_g = \vartheta \cdot BCE(G(z|c),1) + \zeta \cdot CCE\\
&L_g = BCE(x,1)+BCE(G(z|c),0)
\end{split}
\end{equation}
where, $L_g$, and $L_d$ are the generator and discriminator losses respectively, $G$ is the generator function, $BCE$ is the binary cross-entropy loss for discriminator and $CCE$ is the categorical cross-entropy loss for the classifier. In this experiment, $\vartheta$ and $\zeta$ are equal to 0.5 and 0.5 respectively.\\
The optimizer used for training the generator and discriminator is ADAM with learning rate, $\beta_1$ and $\beta_2$ equal to 0.0002, 0.5 and 0.999 respectively. And the classifier is optimized using nestrov momentum gradient descent with learning rate and momentum equal to 0.01 and 0.9 respectively. The results for ACGAN and proposed method are shown in figures \ref{fig:cfar1}\footnote{https://github.com/King-Of-Knights/Keras-ACGAN-CIFAR10/blob/master/plot\_epoch\_220\_generated.png} and \ref{fig:cfar2} respectively.
\begin{figure}
  \includegraphics[width=\columnwidth]{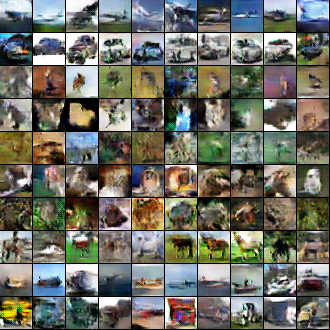}
\caption{Generated samples using ACGAN.}
\label{fig:cfar1}       
\end{figure}
\begin{figure}
  \includegraphics[width=\columnwidth]{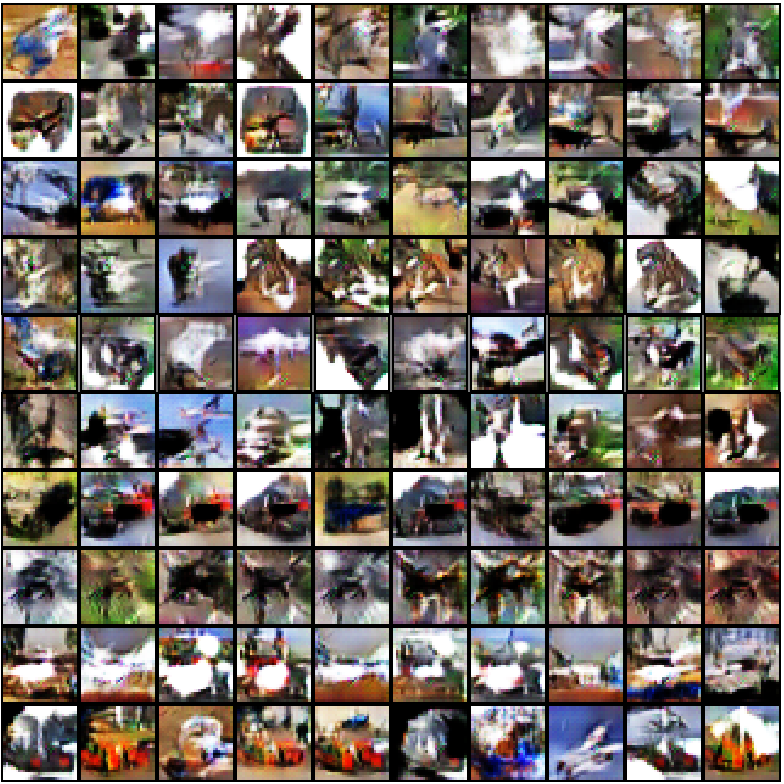}
\caption{Generated samples using VAC+GAN.}
\label{fig:cfar2}       
\end{figure}
The CFAR10 database is an extremely unconstrained and there are just 10000 samples in each class. Therefore the output of both implementations are vague and in order to compare these methods the classification errors are compared. The confusion matrix for ACGAN and VAC+GAN are shown in figures \ref{fig:cfar3}\footnote{https://github.com/King-Of-Knights/Keras-ACGAN-CIFAR10/blob/master/Confusion\_Matrix.png} and \ref{fig:cfar4} respectively.
\begin{figure}
  \includegraphics[width=\columnwidth]{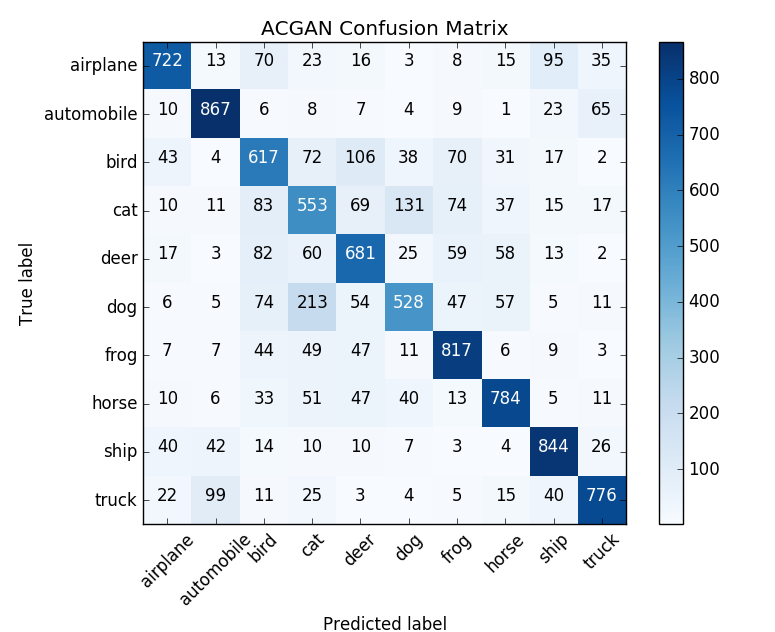}
\caption{Confusion matrix for ACGAN method on CFAR10.}
\label{fig:cfar3}       
\end{figure}
\begin{figure}
  \includegraphics[width=\columnwidth]{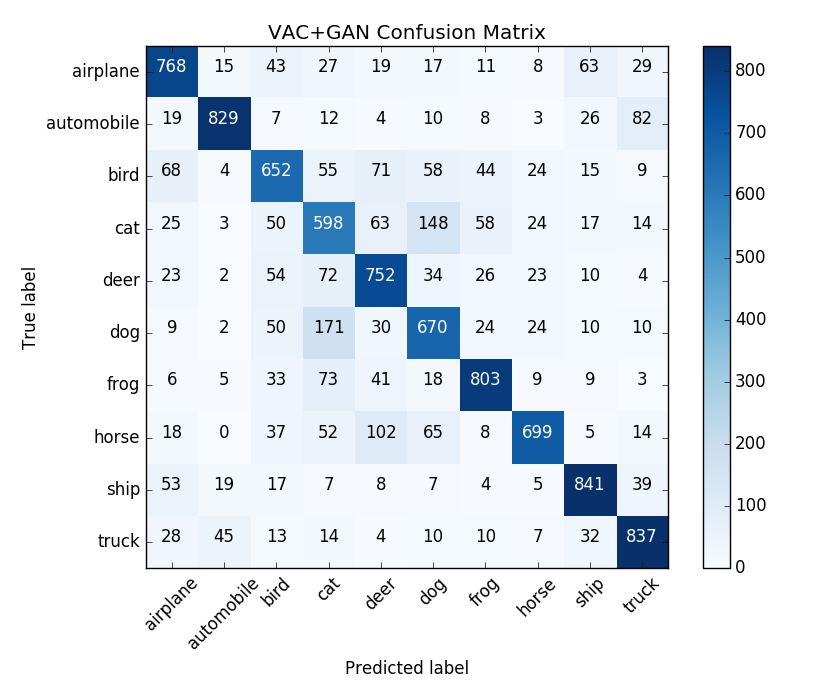}
\caption{Confusion matrix for VAC+GAN method on CFAR10.}
\label{fig:cfar4}       
\end{figure}
Confusion matrices show the better classification performed by the VAC+GAN compared to the ACGAN. Classification accuracies for ACGAN and VAC+GAN on CFAR10 are $71.89\%$ and $74.49\%$ respectively after 200 epochs. The proposed method gives higher accuracy. The main advantage of the proposed method is the versatility in choosing the proper classifier network while in the ACGAN method the classification task is restrained to discriminator because the discriminator is performing as classifier as well. The VAC+GAN method is versatile in choosing the GAN scheme as well. It can be applied to any GAN implementation just by placing a classifier in parallel with discriminator.

\section{Discussion and Conclusion}
\label{sec:4}
In this work, a new approach introduced to train conditional deep generators. It also has been proven that VAC+GAN is applicable to any GAN framework regardless of the model structure and/or loss function (see Sec \ref{sec:2}) for multi class problems. The idea is to place a classifier in parallel to the discriminator network and back-propagate the classification loss through the generator network in the training stage.\\
It has also been shown that the presented framework increases the Jensen Shannon Divergence (JSD) between classes generated by the deep generator. i.e., the generator can produce more distinct samples for different classes which is desirable.\\
The results has been compared to the implementation of CGAN, CDCGAN and ACGAN on MNIST dataset and also the comparisons are given on CFAR10 dataset with respect to ACGAN method. The ACGAN gives comparable results, but the main advantage of the proposed method is its versatility in choosing the GAN scheme and also the classifier architecture.\\
The future work includes applying the method to datasets with larger number of classes and also extend the implementation for bigger size images. The other idea is to apply this method to regression problems

\begin{acknowledgements}
This research is funded under the SFI Strategic Partnership Program by Science Foundation Ireland (SFI) and FotoNation Ltd. Project ID: 13/SPP/I2868 on Next Generation Imaging for Smartphone and Embedded Platforms.
\end{acknowledgements}

\bibliographystyle{spmpsci}      
\bibliography{lib}   
%
%

\end{document}